\newcolumntype{L}{>{\centering\arraybackslash} m{0.04\columnwidth}} 
\newcolumntype{R}{>{\centering\arraybackslash} m{0.48\columnwidth}} 
\newcolumntype{S}{>{\centering\arraybackslash} m{0.32\columnwidth}} 
\newtheorem{lemma}{Lemma}
\newtheorem{theorem}{Theorem}
\newcommand{\E}{\mathbb{E}}
\newcommand{\reals}{\mathbb{R}}
\newcommand{\thmref}[1]{Theorem~\ref{#1}}
\newcommand{\lemref}[1]{Lemma~\ref{#1}}
\newenvironment{myalgo}[1]%
{
\begin{center}
\begin{boxedminipage}{0.8\linewidth}
\begin{center}
\textbf{\texttt{#1}}
\end{center}
\rm
\begin{tabbing}
....\=...\=...\=...\=...\=  \+ \kill
} %
{\end{tabbing} 
\end{boxedminipage} \end{center} 
}
\title{SDCA without Duality}
\author{Shai Shalev-Shwartz\thanks{School of Computer Science and
    Engineering, The Hebrew University, Jerusalem, Israel}}
\date{}
\begin{document}

\maketitle

\begin{abstract}
Stochastic Dual Coordinate Ascent is a popular method for
solving regularized loss minimization for the case of convex losses.
In this paper we show how a variant of SDCA can be applied for non-convex losses.
We prove linear convergence rate even if individual loss functions are
non-convex as long as the expected loss is convex. 
\end{abstract}

\section{Introduction}

The following regularized loss minimization problem is associated with many machine
learning methods:
\[
\min_{w\in \reals^d} P(w) := \frac{1}{n} \sum_{i=1}^n \phi_i(w) + \frac{\lambda}{2}
\|w\|^2 ~.
\]
One of the most popular methods for solving this problem is Stochastic
Dual Coordinate Ascent (SDCA). \cite{ShalevZh2013} analyzed this
method, and showed that when each $\phi_i$ is $L$-smooth and convex
then the convergence rate of SDCA is $\tilde{O}((L/\lambda +
n)\log(1/\epsilon))$. 

As its name indicates, SDCA is derived by considering a dual
problem. In this paper, we consider the possibility of applying SDCA
for problems in which individual $\phi_i$ are non-convex, e.g., deep
learning optimization problems. In many such cases, the dual problem
is meaningless. Instead of directly using the dual problem, we
describe and analyze a variant of SDCA in which only gradients of
$\phi_i$ are being used (similar to option 5 in the pseudo code of
Prox-SDCA given in \cite{ShalevZhangAcc2015}). Following
\cite{johnson2013accelerating}, we show that SDCA is a variant of the
Stochastic Gradient Descent (SGD), that is, its update is based on an
unbiased estimate of the gradient. But, unlike the vanilla SGD, for
SDCA the variance of the estimation of the gradient tends 
to zero as we converge to a minimum.

For the case in which each $\phi_i$ is $L$-smooth and convex, we
derive the same linear convergence rate of $\tilde{O}((L/\lambda +
n)\log(1/\epsilon))$ as in \cite{ShalevZh2013}, but with a simpler,
direct, dual-free, proof. We also provide a linear convergence rate for the
case in which individual $\phi_i$ can be non-convex, as long as the
average of $\phi_i$ are convex. The rate for non-convex losses has a
worst dependence on $L/\lambda$ and we leave it open to see if a
better rate can be obtained for the non-convex case. 


\paragraph{Related work:} 
In recent years, many methods for optimizing regularized loss
minimization problems have been proposed. For example,
SAG~\cite{LSB12-sgdexp}, SVRG~\cite{johnson2013accelerating},
Finito~\cite{defazio2014finito}, SAGA~\cite{defazio2014saga}, and
S2GD~\cite{konevcny2013semi}. The best convergence rate is for
accelerated SDCA~\cite{ShalevZhangAcc2015}.  A systematic study of the
convergence rate of the different methods under non-convex losses is
left to future work.


\section{SDCA without Duality}

We maintain pseudo-dual vectors $\alpha_1,\ldots,\alpha_n$, where each
$\alpha_i \in \reals^d$. 

\begin{myalgo}{Dual-Free SDCA($P,T,\eta,\alpha^{(0)}$)} 
\textbf{Goal:} Minimize $P(w) = \frac{1}{n} \sum_{i=1}^n
\phi_i(w) + \frac{\lambda}{2} \|w\|^2$ \\
\textbf{Input:} Objective $P$, number of iterations $T$, step size
$\eta$ s.t. $\beta := \eta \lambda n < 1$, \+ \\ initial dual vectors $\alpha^{(0)} =
(\alpha_1^{(0)},\ldots,\alpha_n^{(0)}$ \- \\ 
\textbf{Initialize:} $w^{(0)} = \frac{1}{\lambda n} \sum_{i=1}^n
\alpha_i^{(0)}$ \\
\textbf{For~} $t=1,\ldots,T$ \+ \\
Pick $i$ uniformly at random from $[n]$ \\
Update: $\alpha_{i}^{(t)} = \alpha_{i}^{(t-1)}  - \eta \lambda n
\left( \nabla \phi_i(w^{(t-1)}) + \alpha_{i}^{(t-1)} \right)$ \\
Update: $w^{(t)} = w^{(t-1)} - \eta \left( \nabla \phi_i(w^{(t-1)}) +  \alpha_{i}^{(t-1)} \right)$
\end{myalgo}
Observe that SDCA keeps the primal-dual relation
\[
w^{(t-1)} = \frac{1}{\lambda n} \sum_{i=1}^n \alpha^{(t-1)}_i
\]
Observe also that the update of $\alpha$ can be rewritten as
\[
\alpha_{i}^{(t)} = (1-\beta) \alpha_{i}^{(t-1)}  + \beta \left(
  -\nabla \phi_i(w^{(t-1)}) \right) ~,
\]
namely, the new value of $\alpha_i$ is a convex combination of its old
value and the negation of the gradient. 
Finally, observe that, conditioned on the value of $w^{(t-1)}$ and
$\alpha^{(t-1)}$, we have that
\begin{align*}
\E[w^{(t)} ] &=  w^{(t-1)} - \eta \left( \nabla \E \phi_i(w^{(t-1)}) +
  \E \alpha_{i}^{(t-1)} \right) \\
&=  w^{(t-1)}  - \eta \left( \nabla \frac{1}{n}
  \sum_{i=1}^n \phi_i(w^{(t-1)}) + \lambda w^{(t-1)} \right) \\
&= w^{(t-1)} - \eta \nabla P(w^{(t-1)}) ~.
\end{align*}
That is, SDCA is in fact an instance of Stochastic Gradient Descent. 
As we will see in the analysis section below, the advantage of SDCA
over a vanilla SGD algorithm is because the \emph{variance} of the
update goes to zero as we converge to an optimum. 

\section{Analysis}

The theorem below provides a linear convergence rate for smooth and convex
functions. The rate matches the analysis given in \cite{ShalevZh2013},
but the analysis is simpler and does not rely on duality. 
\begin{theorem} \label{thm:main}
Assume that each $\phi_i$ is $L$-smooth and convex, and the algorithm is run
with $\eta \le \frac{1}{L + \lambda n}$. 
Let $w^*$ be the minimizer of $P(w)$ and let $\alpha^*_i = -\nabla \phi_i(w^*)$.
Then, for every $t \ge 1$,
\[
\E\left[ \frac{\lambda}{2}
\|w^{(t)} - w^*\|^2 + \frac{1}{2Ln} \sum_{i=1}^n [ \|\alpha^{(t)}_i  -
\alpha^*_i\|^2] \right] ~\le~
e^{-\eta \lambda t} ~ \left[ \frac{\lambda}{2}
\|w^{(0)} - w^*\|^2 + \frac{1}{2Ln} \sum_{i=1}^n [ \|\alpha^{(0)}_i  -
\alpha^*_i\|^2] \right] ~.
\]
In particular, setting $\eta = \frac{1}{L + \lambda n} $, then after 
\[
T \ge 
\tilde{\Omega}\left( \frac{L}{\lambda} + n\right)
\]
iterations we will have  $\E[P(w^{(T)})-P(w^*)] \le \epsilon$.
\end{theorem}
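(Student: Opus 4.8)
The plan is to prove the displayed inequality by exhibiting a one-step contraction for the Lyapunov function
\[
\Psi_t \;:=\; \frac{\lambda}{2}\|w^{(t)}-w^*\|^2 \;+\; \frac{1}{2Ln}\sum_{i=1}^n \|\alpha_i^{(t)}-\alpha_i^*\|^2 ,
\]
namely, conditioned on $w^{(t-1)}$ and $\alpha^{(t-1)}$, that $\E[\Psi_t]\le (1-\eta\lambda)\,\Psi_{t-1}$. Granting this, iterating and taking total expectation gives $\E[\Psi_t]\le (1-\eta\lambda)^t\Psi_0\le e^{-\eta\lambda t}\Psi_0$, which is the displayed bound (and in particular it bounds $\E[\tfrac\lambda2\|w^{(t)}-w^*\|^2]$ since the $\alpha$-part of $\Psi_t$ is nonnegative). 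For the final claim, $\eta=\tfrac{1}{L+\lambda n}$ gives $\tfrac{1}{\eta\lambda}=\tfrac L\lambda+n$; since $P$ is $(L+\lambda)$-smooth and $\nabla P(w^*)=0$, $\E[P(w^{(T)})-P(w^*)]\le\tfrac{L+\lambda}{2}\E\|w^{(T)}-w^*\|^2\le \tfrac{L+\lambda}{\lambda}e^{-\eta\lambda T}\Psi_0$, which drops below $\epsilon$ once $T\ge\tfrac1{\eta\lambda}\log\!\big(\tfrac{(L+\lambda)\Psi_0}{\lambda\epsilon}\big)=\tilde\Omega(L/\lambda+n)$; here $\Psi_0$ is a constant depending only on the problem (e.g.\ $\tfrac\lambda2\|w^*\|^2+\tfrac1{2Ln}\sum_i\|\nabla\phi_i(w^*)\|^2$ when $\alpha^{(0)}=0$).

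To get the one-step bound, fix the index $i$ drawn at iteration $t$ and write $a_i:=\alpha_i^{(t-1)}-\alpha_i^*$, $b_i:=-\nabla\phi_i(w^{(t-1)})-\alpha_i^* = -\big(\nabla\phi_i(w^{(t-1)})-\nabla\phi_i(w^*)\big)$, and let $v_i:=\nabla\phi_i(w^{(t-1)})+\alpha_i^{(t-1)}$ be the common ``noise vector'' used by both updates, so that $\alpha_i^{(t)}-\alpha_i^*=(1-\beta)a_i+\beta b_i$ with $\beta=\eta\lambda n$, $w^{(t)}-w^*=w^{(t-1)}-w^*-\eta v_i$, and $\E_i v_i=\nabla P(w^{(t-1)})$. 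The one algebraic identity that makes the pieces fit is $v_i=a_i-b_i$. For the $\alpha$-part only the $i$-th block moves, and the convex-combination identity $\|(1-\beta)a+\beta b\|^2=(1-\beta)\|a\|^2+\beta\|b\|^2-\beta(1-\beta)\|a-b\|^2$, after averaging over $i$ and using $\beta=\eta\lambda n$, gives
\[
\E_i\Big[\tfrac{1}{2Ln}\textstyle\sum_j\|\alpha_j^{(t)}-\alpha_j^*\|^2\Big]=(1-\eta\lambda)\tfrac{1}{2Ln}\textstyle\sum_i\|a_i\|^2+\tfrac{\eta\lambda}{2Ln}\textstyle\sum_i\|b_i\|^2-\tfrac{\eta\lambda(1-\beta)}{2Ln}\textstyle\sum_i\|a_i-b_i\|^2 .
\]
For the $w$-part, expanding the square, taking $\E_i$, multiplying by $\tfrac\lambda2$, and using $\E_i\|v_i\|^2=\tfrac1n\sum_i\|a_i-b_i\|^2$ produces $\tfrac\lambda2\|w^{(t-1)}-w^*\|^2-\eta\lambda\inner{\nabla P(w^{(t-1)}),w^{(t-1)}-w^*}+\tfrac{\eta^2\lambda}{2n}\sum_i\|a_i-b_i\|^2$.

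Adding the two contributions, the $\|a_i-b_i\|^2$ terms combine with total coefficient $\tfrac{\eta\lambda}{2n}\big(\eta-\tfrac{1-\beta}{L}\big)$, which is nonpositive exactly when $\eta(L+\lambda n)\le1$ --- precisely the stated step-size restriction --- so it may be dropped. Subtracting $(1-\eta\lambda)\Psi_{t-1}$ and dividing by $\eta\lambda>0$, what remains to show is
\[
\tfrac\lambda2\|w^{(t-1)}-w^*\|^2-\inner{\nabla P(w^{(t-1)}),w^{(t-1)}-w^*}+\tfrac{1}{2Ln}\textstyle\sum_i\|b_i\|^2\ \le\ 0 .
\]
By $\lambda$-strong convexity of $P$ the first two terms are at most $-\big(P(w^{(t-1)})-P(w^*)\big)$, so it suffices to bound $\tfrac{1}{2Ln}\sum_i\|b_i\|^2\le P(w^{(t-1)})-P(w^*)$. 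This is the one place where $L$-smoothness and convexity of the individual $\phi_i$ are used: they yield $\|\nabla\phi_i(w)-\nabla\phi_i(w^*)\|^2\le 2L\,D_i(w)$ for $D_i(w):=\phi_i(w)-\phi_i(w^*)-\inner{\nabla\phi_i(w^*),w-w^*}\ge0$, and averaging with the optimality identity $\tfrac1n\sum_i\nabla\phi_i(w^*)=-\lambda w^*$ (i.e.\ $\nabla P(w^*)=0$) gives $\tfrac1n\sum_i D_i(w)=\big(P(w)-P(w^*)\big)-\tfrac\lambda2\|w-w^*\|^2\le P(w)-P(w^*)$; combining the last two displays finishes the step. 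I expect the only real content to be in choosing the potential $\Psi_t$ with exactly these weights and noticing $v_i=a_i-b_i$; after that the argument is bookkeeping plus two textbook convex-smoothness inequalities, and the hypothesis $\eta\le1/(L+\lambda n)$ is invoked exactly once, to annihilate the variance term.
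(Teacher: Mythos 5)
Your proposal is correct and follows essentially the same route as the paper: your Lyapunov function $\Psi_t$ is exactly the paper's potential $D_t = \tfrac{1}{2L}A_t + \tfrac{\lambda}{2}B_t$, your bound $\tfrac{1}{2Ln}\sum_i\|b_i\|^2 \le P(w^{(t-1)})-P(w^*)-\tfrac{\lambda}{2}\|w^{(t-1)}-w^*\|^2$ is the paper's Lemma~\ref{lem:uabound} (proved the same way, via self-boundedness of the nonnegative smooth Bregman terms), and the step-size condition is used identically to kill the $\|v_t\|^2$ term. The only cosmetic difference is bookkeeping: you invoke strong convexity of $P$ for the inner-product term and discard the $-\tfrac{\lambda}{2}\|w-w^*\|^2$ from the lemma, whereas the paper uses plain convexity and keeps that term; the two cancellations are equivalent.
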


The theorem below provides a linear convergence rate for smooth
functions, without assuming that individual $\phi_i$ are convex. We
only require that the average of $\phi_i$ is convex. The dependence on
$L/\lambda$ is worse in this case. 
\begin{theorem} \label{thm:main2} Assume that each $\phi_i$ is
  $L$-smooth and that the average function, $\frac{1}{n} \sum_{i=1}^n
  \phi_i$, is convex. Let $w^*$ be the minimizer of $P(w)$ and let
  $\alpha^*_i = -\nabla \phi_i(w^*)$. Then, if we run SDCA with $\eta
  = \min\{\frac{\lambda}{2L^2} ~,~ \frac{1}{2\lambda n}\}$, we have
  that
\[
\E\left[ \frac{\lambda}{2}
\|w^{(t)} - w^*\|^2 + \frac{\lambda}{2L^2 n} \sum_{i=1}^n [ \|\alpha^{(t)}_i  -
\alpha^*_i\|^2]  \right] ~\le~
e^{-\eta \lambda t} ~ \left[ \frac{\lambda}{2} 
\|w^{(0)} - w^*\|^2 + \frac{\lambda}{2L^2 n} \sum_{i=1}^n [ \|\alpha^{(0)}_i  -
\alpha^*_i\|^2] \right] ~.
\]
It follows that whenever 
\[
T \ge \tilde{\Omega}\left( \frac{L^2}{\lambda^2} + n\right)
\]
we have that $\E[P(w^{(T)})-P(w^*)] \le \epsilon$.
\end{theorem}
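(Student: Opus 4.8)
The plan is to exhibit a Lyapunov (potential) function that contracts by a factor $1-\eta\lambda$ in expectation at every iteration, and then unroll. Let
\[
\Upsilon^{(t)} \;:=\; \frac{\lambda}{2}\,\|w^{(t)}-w^*\|^2 \;+\; \frac{\lambda}{2L^2 n}\sum_{i=1}^n\|\alpha_i^{(t)}-\alpha_i^*\|^2 ,
\]
which is exactly the quantity inside the expectations in the statement. I would prove the one–step bound
\[
\E\!\big[\Upsilon^{(t)}\mid w^{(t-1)},\alpha^{(t-1)}\big] \;\le\; (1-\eta\lambda)\,\Upsilon^{(t-1)} ,
\]
then take total expectations, iterate, and use $1-x\le e^{-x}$. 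The last sentence of the theorem then follows because $\frac1n\sum_i\phi_i$ is $L$-smooth and hence $P$ is $(L+\lambda)$-smooth with $\nabla P(w^*)=0$, so $P(w^{(T)})-P(w^*)\le\frac{L+\lambda}{2}\|w^{(T)}-w^*\|^2\le\frac{L+\lambda}{\lambda}\Upsilon^{(T)}$, and since $\frac1{\eta\lambda}=\max\{2L^2/\lambda^2,\,2n\}=\Theta(L^2/\lambda^2+n)$, after $T=\tilde\Omega(L^2/\lambda^2+n)$ steps the right-hand side is below $\epsilon$.

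For the one–step inequality, first record the preliminaries: the primal–dual invariant $w^{(t)}=\frac1{\lambda n}\sum_i\alpha_i^{(t)}$ (induction on the updates), the identity $\frac1{\lambda n}\sum_i\alpha_i^*=w^*$ (from $\nabla P(w^*)=0$), and that $P$ is $\lambda$-strongly convex since $\frac1n\sum_i\phi_i$ is convex. Write $v_i:=\nabla\phi_i(w^{(t-1)})+\alpha_i^{(t-1)}$ for the stochastic search direction, so $w^{(t)}=w^{(t-1)}-\eta v_i$ and $\alpha_i^{(t)}=\alpha_i^{(t-1)}-\beta v_i$ with $\beta=\eta\lambda n\le\frac12$; the excerpt already shows $\E[v_i]=\nabla P(w^{(t-1)})$, and since $v_i=(\alpha_i^{(t-1)}-\alpha_i^*)-(\nabla\phi_i(w^*)-\nabla\phi_i(w^{(t-1)}))$, $L$-smoothness gives $\frac1n\sum_i\|v_i\|^2\le \frac2n\sum_i\|\alpha_i^{(t-1)}-\alpha_i^*\|^2+2L^2\|w^{(t-1)}-w^*\|^2$. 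For the primal block I would expand $\E\|w^{(t)}-w^*\|^2=\|w^{(t-1)}-w^*\|^2-2\eta\inner{w^{(t-1)}-w^*,\nabla P(w^{(t-1)})}+\eta^2\E\|v_i\|^2$, lower–bound the inner product by strong convexity as $(P(w^{(t-1)})-P(w^*))+\frac\lambda2\|w^{(t-1)}-w^*\|^2$ (this is the step that uses convexity of the \emph{average}, replacing the per-function co-coercivity of Theorem~\ref{thm:main}), and observe that the functional gap $P(w^{(t-1)})-P(w^*)\ge\frac\lambda2\|w^{(t-1)}-w^*\|^2$ absorbs the part of $\eta^2\E\|v_i\|^2$ proportional to $\|w^{(t-1)}-w^*\|^2$ precisely when $\eta\le\lambda/(2L^2)$; what remains is a contribution proportional to $\frac1n\sum_i\|\alpha_i^{(t-1)}-\alpha_i^*\|^2$.

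For the dual block only coordinate $i$ changes, with probability $1/n$, and $\alpha_i^{(t)}-\alpha_i^*=(1-\beta)(\alpha_i^{(t-1)}-\alpha_i^*)+\beta(\nabla\phi_i(w^*)-\nabla\phi_i(w^{(t-1)}))$; expanding this square produces a strict contraction factor on $\|\alpha_i^{(t-1)}-\alpha_i^*\|^2$ (of order $\beta$, hence rate $\eta\lambda$), a cross term that I would control by Young's inequality, and a term bounded by $L^2\|w^{(t-1)}-w^*\|^2$. Adding the two blocks, I would collect the coefficient of $\|w^{(t-1)}-w^*\|^2$ and the coefficient of $\frac1n\sum_i\|\alpha_i^{(t-1)}-\alpha_i^*\|^2$ and check that each is at most $(1-\eta\lambda)$ times its value in $\Upsilon^{(t-1)}$. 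This is where the specific choices are forced: the dual weight must be $\lambda/L^2$ (so that the $L^2\|w^{(t-1)}-w^*\|^2$ terms enter with a harmless $\lambda$ factor), the step must satisfy $\eta L^2\le\lambda/2$ (for the primal variance) and $\eta\lambda^2 n\le\lambda/2$ (for the dual variance), i.e.\ $\eta=\min\{\lambda/(2L^2),\,1/(2\lambda n)\}$, and one must use $\beta\le\frac12$ when choosing the Young parameter for the cross term.

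I expect the coefficient accounting to be the main obstacle, for a structural reason. In Theorem~\ref{thm:main}, co-coercivity of each convex $L$-smooth $\phi_i$ turns $\inner{w^{(t-1)}-w^*,\nabla P(w^{(t-1)})}$ into $\lambda\|w^{(t-1)}-w^*\|^2+\frac1{Ln}\sum_i\|\nabla\phi_i(w^{(t-1)})-\nabla\phi_i(w^*)\|^2$, and that extra term exactly cancels the dominant part of the update variance with dual weight $1/L$. Without convexity of the individual $\phi_i$ this term is unavailable; the only estimate left is $\frac1n\sum_i\|\nabla\phi_i(w^{(t-1)})-\nabla\phi_i(w^*)\|^2\le L^2\|w^{(t-1)}-w^*\|^2$, which is much weaker, so the whole variance must be charged against the strong-convexity decrease of the primal block (equivalently, against the functional gap). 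This costs a factor $L/\lambda$ in both the admissible step size and the dual weight, which is what turns the $L/\lambda+n$ rate into $L^2/\lambda^2+n$; making the numerical constants close with exactly the stated step size is the finicky part and will require a careful choice of the Young's-inequality parameter together with the bound $\beta\le\frac12$.
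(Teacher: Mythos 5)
Your overall skeleton matches the paper's: the same potential $\Upsilon^{(t)}=\frac{\lambda}{2}\|w^{(t)}-w^*\|^2+\frac{\lambda}{2L^2n}\sum_i\|\alpha_i^{(t)}-\alpha_i^*\|^2$, the same target one-step contraction $\E[\Upsilon^{(t)}]\le(1-\eta\lambda)\Upsilon^{(t-1)}$, the same use of $\|\nabla\phi_i(w^{(t-1)})-\nabla\phi_i(w^*)\|\le L\|w^{(t-1)}-w^*\|$ in place of co-coercivity, the same strong-convexity bound $(w^{(t-1)}-w^*)^\top\nabla P(w^{(t-1)})\ge\lambda\|w^{(t-1)}-w^*\|^2$, and the same smoothness argument for the last sentence. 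However, there is a genuine gap in how you propose to handle the variance term $\eta^2\E\|v_t\|^2$, and as described your accounting does not close. Track the weighted coefficients: the dual block (with the Jensen/Young bound $\|(1-\beta)a+\beta b\|^2\le(1-\beta)\|a\|^2+\beta\|b\|^2$) contributes $-\eta\lambda\cdot\frac{\lambda}{2L^2}A_{t-1}+\frac{\eta\lambda^2}{2}B_{t-1}$, and the primal strong-convexity term contributes $-\eta\lambda^2 B_{t-1}$. These three terms sum to exactly $-\eta\lambda\Upsilon^{(t-1)}$ with \emph{zero} slack. Your bound $\E\|v_t\|^2\le\frac{2}{n}\sum_i\|\alpha_i^{(t-1)}-\alpha_i^*\|^2+2L^2\|w^{(t-1)}-w^*\|^2$ then adds a strictly positive quantity, and with the stated $\eta=\min\{\lambda/(2L^2),1/(2\lambda n)\}$ the inequalities $\lambda\eta^2\le\eta\lambda\cdot\frac{\lambda}{2L^2}$ and $\lambda\eta^2L^2\le\frac{\eta\lambda^2}{2}$ are tight, so you end up with $\E[\Upsilon^{(t)}]\le\Upsilon^{(t-1)}$ and no contraction at all. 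Shrinking $\eta$ does not rescue the full rate either, since the exactly-balancing terms all scale linearly in $\eta$; at best you would get a contraction factor $(1-\eta\lambda/2)$ with a smaller step size, which is a weaker statement than the one claimed.

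The missing idea is that the $\|v_t\|^2$ term should never be bounded by $A_{t-1}$ and $B_{t-1}$ at all. The paper keeps the \emph{exact} convex-combination identity
\[
\|(1-\beta)a+\beta b\|^2=(1-\beta)\|a\|^2+\beta\|b\|^2-\beta(1-\beta)\|a-b\|^2,
\]
applied with $a=\alpha_i^{(t-1)}-\alpha_i^*$ and $b=u_i-\alpha_i^*$, so that $a-b=v_t$. The resulting negative term, after weighting by $\frac{\lambda}{2L^2n}$, is $-\frac{\eta\lambda^2(1-\beta)}{2L^2}\|v_t\|^2$, and since $\beta\le\frac12$ and $\eta\le\frac{\lambda}{2L^2}$ this dominates the primal block's $+\frac{\lambda\eta^2}{2}\|v_t\|^2$ pointwise, for each realization of $i$, before any expectation is taken. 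This cancellation is precisely the "variance reduction" mechanism of SDCA and is what frees up the entire strong-convexity decrease to deliver the $(1-\eta\lambda)$ factor. Young's inequality cannot recover this negative term (it can only reproduce the Jensen upper bound), so you need the exact expansion. With that one change your argument becomes the paper's proof.
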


\subsection{SDCA as variance-reduced SGD}

As we have shown before, SDCA is an instance of SGD, in the sense that
the update can be written as $w^{(t)} =
w^{(t-1)} - \eta v_t$, with $v_t = \nabla \phi_i(w^{(t-1)}) +
\alpha_{i}^{(t-1)}$ satisfying $\E[v_t] = \nabla P(w^{(t-1)})$.

The advantage of SDCA over a generic SGD is that the variance of the
update goes to zero as we converge to the optimum. To see this,
observe that
\begin{align*}
\E[\|v_t\|^2] &= \E[\|\alpha_i^{(t-1)} + \nabla \phi_i(w^{(t-1)})\|^2] = \E[\|\alpha_i^{(t-1)}-\alpha^*_i+\alpha^*_i +\nabla \phi_i(w^{(t-1)})\|^2] \\
&\le 2\E[\|\alpha_i^{(t-1)} - \alpha_i^*\|^2] + 2\E[\|-\nabla
\phi_i(w^{(t-1)}) - \alpha_i^*\|^2]
\end{align*}
\thmref{thm:main} (or \thmref{thm:main2}) tells us that the term
$\E[\|\alpha_i^{(t-1)} - \alpha_i^*\|^2] $ goes to zero as $e^{-\eta
  \lambda t}$. For the second term, by smoothness of $\phi_i$ we have
$\|-\nabla \phi_i(w^{(t-1)}) - \alpha_i^*\| = \|\nabla
\phi_i(w^{(t-1)}) - \nabla \phi_i(w^*)\| \le L \|w^{(t-1)} - w^*\|$,
and therefore, using \thmref{thm:main} (or \thmref{thm:main2}) again,
the second term also goes to zero as $e^{-\eta \lambda t}$. All in
all, when $t \ge \tilde{\Omega}\left( \frac{1}{\eta \lambda}
  \log(1/\epsilon)\right)$ we will have that $\E[\|v_t\|^2] \le
\epsilon$.

\section{Proofs}

Observe that $0 = \nabla P(w^*) = \frac{1}{n} \sum_i \nabla
\phi_i(w^*) + \lambda w^*$, which implies that $w^* = \frac{1}{\lambda
  n} \sum_i \alpha_i^*$.

Define $u_i =-\nabla \phi_i(w^{(t-1)})$ and $v_t = -u_i +
\alpha_i^{(t-1)}$. We also denote two potentials:
\[
A_t = \frac{1}{n} \sum_{j=1}^n \|\alpha^{(t)}_j  - \alpha^*_j\|^2
~~~,~~~
B_t = \|w^{(t)}-w^*\|^2 ~.
\]
We will first analyze the evolution of $A_t$ and $B_t$.  If on round
$t$ we update using element $i$ then $\alpha_i^{(t)} = (1-\beta)
\alpha_i^{(t-1)} + \beta u_i$, where $\beta = \eta \lambda n$. It
follows that,
\begin{align}  \label{eqn:Aevo}
&A_t- A_{t-1} =  \frac{1}{n} \|\alpha^{(t)}_i  - \alpha^*_i\|^2- \frac{1}{n} \|\alpha^{(t-1)}_i  -
\alpha^*_i\|^2  \\ \nonumber
&= \frac{1}{n} \|(1-\beta) (\alpha^{(t-1)}_i  - \alpha^*_i) +
\beta(u_i - \alpha_i^*) \|^2- \frac{1}{n} \|\alpha^{(t-1)}_i  -
\alpha^*_i\|^2 \\ \nonumber
&= \frac{1}{n} \left( (1-\beta)\|\alpha^{(t-1)}_i  - \alpha^*_i\|^2 +
  \beta \|u_i - \alpha_i^*\|^2 -
  \beta(1-\beta)\|\alpha^{(t-1)}_i - u_i \|^2 - \|\alpha^{(t-1)}_i  -
\alpha^*_i\|^2 \right) \\ \nonumber
&= \frac{\beta}{n} \left( -\|\alpha^{(t-1)}_i  - \alpha^*_i\|^2 +
  \|u_i - \alpha_i^*\|^2 -
  (1-\beta) \|v_t\|^2 \right) \\ \nonumber
&= \eta \lambda \left( -\|\alpha^{(t-1)}_i  - \alpha^*_i\|^2 +
  \|u_i - \alpha_i^*\|^2 -
  (1-\beta) \|v_t\|^2 \right) ~.
\end{align}
In addition,
\begin{equation} \label{eqn:Bevo}
B_t-B_{t-1} = 
\|w^{(t)}-w^*\|^2-\|w^{(t-1)}-w^*\|^2 = -2\eta(w^{(t-1)}-w^*)^\top v_t + \eta^2 \|v_t\|^2 ~.
\end{equation}

The proofs of \thmref{thm:main} and \thmref{thm:main2} will follow by
studying different combinations of $A_t$ and $B_t$. 

\subsection{Proof of \thmref{thm:main2}}

Define 
\[
C_t = \frac{\lambda}{2} \left[ \frac{1}{L^2}
  A_t + B_t
\right] ~.
\]
Combining \eqref{eqn:Aevo} and \eqref{eqn:Bevo} we obtain
\begin{align*}
&C_{t-1} - C_t \\
&= \frac{\eta \lambda^2 }{2L^2} \left( \|\alpha^{(t-1)}_i  - \alpha^*_i\|^2 -
  \|u_i - \alpha_i^*\|^2 +
  (1-\beta) \|v_t\|^2 \right)+ \frac{\lambda}{2}\left[2\eta(w^{(t-1)}-w^*)^\top v_t - \eta^2 \|v_t\|^2\right]\\
  &= \eta \lambda\left[
  \frac{\lambda}{2 L^2}\left( \|\alpha^{(t-1)}_i  - \alpha^*_i\|^2 -
  \|u_i - \alpha_i^*\|^2 \right) + \left(\frac{\lambda (1-\beta)}{2L^2}-\frac{\eta}{2} \right) \|v_t\|^2 + (w^{(t-1)}-w^*)^\top v_t 
  \right] 
\end{align*}
The definition of $\eta$ implies that $\eta \le \lambda(1-\beta)/L^2$,
so the coefficient of $\|v_t\|^2$ is non-negative. By
smoothness of each $\phi_i$ we have $\|u_i - \alpha_i^*\|^2 = \|\nabla
\phi_i(w^{(t-1)})-\nabla \phi_i(w^*)\|^2 \le
L^2\|w^{(t-1)}-w^*\|^2$. Therefore,
\[
C_{t-1} - C_t ~\ge~ \eta \lambda\left[ \frac{\lambda}{2 L^2}
  \|\alpha^{(t-1)}_i  - \alpha^*_i\|^2 - \frac{\lambda}{2}
  \|w^{(t-1)}-w^*\|^2 +  (w^{(t-1)}-w^*)^\top v_t 
  \right] ~.
\]
Taking expectation of both sides (w.r.t. the choice of $i$ and conditioned on $w^{(t-1)}$ and
$\alpha^{(t-1)}$) and noting that $\E[v_t] = \nabla
P(w^{(t-1)})$, we obtain that
\[
\E[C_{t-1}-C_t] ~\ge~ \eta \lambda\left[ \frac{\lambda}{2 L^2}
  \E \|\alpha^{(t-1)}_i  - \alpha^*_i\|^2 - \frac{\lambda}{2}
  \|w^{(t-1)}-w^*\|^2 +  (w^{(t-1)}-w^*)^\top \nabla P(w^{(t-1)})   \right] ~.
\]
Using the  strong convexity of $P$ we have $ (w^{(t-1)}-w^*)^\top \nabla
P(w^{(t-1)}) \ge P(w^{(t-1)})-P(w^*) +
\frac{\lambda}{2} \| w^{(t-1)}-w^*\|^2$ and $P(w^{(t-1)})-P(w^*) \ge
\frac{\lambda}{2} \| w^{(t-1)}-w^*\|^2$, which together yields
$ (w^{(t-1)}-w^*)^\top  \nabla P(w^{(t-1)})\ge \lambda \| w^{(t-1)}-w^*\|^2 $. 
Therefore,
\begin{align*}
&E[C_{t-1} - C_t] \\
&\ge \eta \lambda\left[
  \frac{\lambda}{2L^2} \E \|\alpha^{(t-1)}_i  - \alpha^*_i\|^2 +
    \left(-\frac{\lambda L^2}{2L^2} + \lambda \right) 
 \| w^{(t-1)}-w^*\|^2  \right]  ~=~ \eta \lambda C_{t-1} ~.
\end{align*}
It follows that
\[
\E[C_t] \le (1-\eta \lambda) C_{t-1}
\]
and repeating this recursively we end up with
\[
\E[C_t] \le (1-\eta \lambda)^t C_0 \le e^{-\eta \lambda t} C_0 ~,
\]
which concludes the proof of the first part of \thmref{thm:main2}.
The second part follows by
observing that $P$ is $(L+\lambda)$ smooth, which gives $P(w)-P(w^*)
\le \frac{L+\lambda}{2} \|w-w^*\|^2$. 

\subsection{Proof of \thmref{thm:main}}

In the proof of \thmref{thm:main} we bounded the term
$\|u_i-\alpha_i^*\|^2$  by $L^2 \|w^{(t-1)}-w^*\|^2$ based on the
smoothness of $\phi_i$. We now assume that $\phi_i$ is also convex,
which enables to bound $\|u_i-\alpha_i^*\|^2$ based on the current sub-optimality. 
\begin{lemma} \label{lem:uabound}
Assume that each $\phi_i$ is $L$-smooth and convex. Then, for every
$w$, 
\[ \frac{1}{n} \sum_{i=1}^n \|\nabla \phi_i(w) - \nabla \phi_i(w^*)\|^2 \le 2L\left(P(w)-P(w^*) - \frac{\lambda}{2} \|w-w^*\|^2\right) ~.
\]
\end{lemma}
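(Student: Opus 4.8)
The plan is to reduce the lemma to the standard ``co-coercivity'' inequality for smooth convex functions applied to each $\phi_i$ individually, and then to average and simplify using the optimality condition $\nabla P(w^*)=0$ (equivalently $\frac1n\sum_i\nabla\phi_i(w^*)=-\lambda w^*$, which was already recorded at the start of the proofs section).

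The first step is to establish, for any $L$-smooth convex function $f$ and any points $x,y$, the bound
\[
\frac{1}{2L}\,\|\nabla f(x)-\nabla f(y)\|^2 ~\le~ f(x)-f(y)-\inner{\nabla f(y),\,x-y} ~.
\]
I would prove this in the usual way: fix $y$ and consider the auxiliary function $g(z)=f(z)-\inner{\nabla f(y),z}$. This $g$ is convex and $L$-smooth, and since $\nabla g(y)=0$ it is minimized at $z=y$. Applying the smoothness (descent) inequality to $g$ at the point $z$ with the step $z\mapsto z-\frac1L\nabla g(z)$ gives $g(y)\le g\!\left(z-\frac1L\nabla g(z)\right)\le g(z)-\frac{1}{2L}\|\nabla g(z)\|^2$. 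Taking $z=x$ and rewriting $g$ in terms of $f$ yields exactly the displayed inequality. (This is the only place convexity is genuinely used, so it is the heart of the argument; everything else is bookkeeping.)

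The second step is to apply this with $f=\phi_i$, $x=w$, $y=w^*$, and average over $i\in[n]$:
\[
\frac{1}{2Ln}\sum_{i=1}^n\|\nabla\phi_i(w)-\nabla\phi_i(w^*)\|^2
~\le~ \frac1n\sum_{i=1}^n\phi_i(w)-\frac1n\sum_{i=1}^n\phi_i(w^*)-\Bigl\langle\tfrac1n\sum_{i=1}^n\nabla\phi_i(w^*),\,w-w^*\Bigr\rangle ~.
\]
Then I would substitute $\frac1n\sum_i\phi_i(w)=P(w)-\frac{\lambda}{2}\|w\|^2$ (and likewise at $w^*$) and $\frac1n\sum_i\nabla\phi_i(w^*)=-\lambda w^*$, and check that the quadratic-in-$w$ terms collapse: $-\frac{\lambda}{2}\|w\|^2+\frac{\lambda}{2}\|w^*\|^2+\lambda\inner{w^*,w-w^*}=-\frac{\lambda}{2}\|w-w^*\|^2$. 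The right-hand side therefore equals $P(w)-P(w^*)-\frac{\lambda}{2}\|w-w^*\|^2$; multiplying through by $2L$ gives the claim.

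I do not anticipate a real obstacle here — the only nontrivial ingredient is the smooth-convex co-coercivity bound in the first step, and the rest is a one-line algebraic simplification driven by the first-order optimality of $w^*$. The main thing to be careful about is keeping the strong-convexity parameter $\lambda$ separate from the smoothness of the $\phi_i$'s (the $\phi_i$ are $L$-smooth and convex but need not be strongly convex), so that the $\frac{\lambda}{2}\|w-w^*\|^2$ term comes entirely from the regularizer and not from any assumed curvature of the losses.
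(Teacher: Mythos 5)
Your proposal is correct and follows essentially the same route as the paper: your auxiliary function $g(z)=f(z)-\inner{\nabla f(y),z}$ is, up to an additive constant, the paper's $g_i(w)=\phi_i(w)-\phi_i(w^*)-\nabla\phi_i(w^*)^\top(w-w^*)$, and your descent-lemma step $g(y)\le g(z)-\frac{1}{2L}\|\nabla g(z)\|^2$ is exactly the self-boundedness inequality $\|\nabla g_i(w)\|^2\le 2L\,g_i(w)$ that the paper cites. The averaging over $i$ and the cancellation of the quadratic terms via $\frac{1}{n}\sum_i\nabla\phi_i(w^*)=-\lambda w^*$ also match the paper's computation.
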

\begin{proof}
For every $i$, define
\[
g_i(w) = \phi_i(w) - \phi_i(w^*) - \nabla \phi_i(w^*)^\top (w-w^*) ~.
\]
Clearly, since $\phi_i$ is $L$-smooth so is $g_i$. In addition, by
convexity of $\phi_i$ we have $g_i(w) \ge 0$ for all $w$. It follows
that $g_i$ is non-negative and smooth, and therefore, it is
self-bounded (see Section 12.1.3 in \cite{MLbook}):
\[
\|\nabla g_i(w)\|^2 \le 2L g_i(w) ~.
\]
Using the definition of $g_i$, we obtain
\[
\|\nabla \phi_i(w) - \nabla \phi_i(w^*)\|^2 = \|\nabla g_i(w)\|^2 \le 2L g_i(w) = 2L\left[ \phi_i(w) - \phi_i(w^*) - \nabla \phi_i(w^*)^\top (w-w^*)\right] ~.
\]
Taking expectation over $i$ and observing that $P(w) = \E \phi_i(w) +
\frac{\lambda}{2} \|w\|^2$ and $0 = \nabla P(w^*) = \E \nabla \phi_i(w^*) + \lambda w^*$ we obtain
\begin{align*}
\E \|\nabla \phi_i(w) - \nabla \phi_i(w^*)\|^2 &\le 
2L\left[ P(w)- \frac{\lambda}{2} \|w\|^2 - P(w^*) + \frac{\lambda}{2}
  \|w^*\|^2 + \lambda w^{*\top} (w-w^*)\right] \\
&= 2L\left[ P(w) - P(w^*) - \frac{\lambda}{2} \|w-w^*\|^2 \right]~.
\end{align*}
\end{proof}

We now consider the potential
\[
D_t =  \frac{1}{2L} A_t + \frac{\lambda}{2} B_t ~.
\]
Combining \eqref{eqn:Aevo} and \eqref{eqn:Bevo} we obtain
\begin{align*}
&D_{t-1}-D_t \\
&= \frac{\eta \lambda}{2L} \left( \|\alpha^{(t-1)}_i  - \alpha^*_i\|^2 -
  \|u_i - \alpha_i^*\|^2 +
  (1-\beta) \|v_t\|^2 \right)+ \frac{\lambda}{2}\left[2\eta(w^{(t-1)}-w^*)^\top v_t - \eta^2 \|v_t\|^2\right]\\
  &= \eta \lambda\left[
  \frac{1}{2L}\left( \|\alpha^{(t-1)}_i  - \alpha^*_i\|^2 -
  \|u_i - \alpha_i^*\|^2 \right) + \left(\frac{(1-\beta)}{2L}-\frac{\eta}{2} \right) \|v_t\|^2 + (w^{(t-1)}-w^*)^\top v_t 
  \right] \\
  &\ge \eta \lambda\left[
  \frac{1}{2L}\left( \|\alpha^{(t-1)}_i  - \alpha^*_i\|^2 -
  \|u_i - \alpha_i^*\|^2 \right) + (w^{(t-1)}-w^*)^\top v_t 
  \right] ~,
\end{align*}
where in the last inequality we used the assumption 
\[
\eta \le \frac{1}{L+\lambda n} ~~\Rightarrow~~ \eta \le \frac{1-\beta}{L} ~.
\]
Take expectation of the above w.r.t. the choice of $i$, using \lemref{lem:uabound}, using $\E[v_t] = \nabla P(w^{(t-1)})$, and using  convexity of $P$ that yields $P(w^*)-P(w^{(t-1)})\ge (w^*-w^{(t-1)})^\top \nabla P(w^{(t-1)})$, we obtain
\begin{align*}
&\E[D_{t-1}-D_t] \\
&\ge \eta \lambda\left[
  \frac{1}{2L}\left( \E\|\alpha^{(t-1)}_i  - \alpha^*_i\|^2 -
  \E\|u_i - \alpha_i^*\|^2 \right) + (w^{(t-1)}-w^*)^\top \E v_t 
  \right] \\
&\ge \eta \lambda\left[
  \frac{1}{2L} \E\|\alpha^{(t-1)}_i  - \alpha^*_i\|^2 -
   \left(P(w^{(t-1)})-P(w^*) - \frac{\lambda}{2} \|w^{(t-1)}-w^*\|^2\right) + (w^{(t-1)}-w^*)^\top \nabla P(w^{(t-1)})
  \right] \\
  &\ge \eta \lambda\left[ \frac{1}{2L} \E\|\alpha^{(t-1)}_i  - \alpha^*_i\|^2 + \frac{\lambda}{2} \|w^{(t-1)}-w^*\|^2 \right] = -\eta \lambda D_{t-1}
\end{align*}
This gives $\E[D_t] \le (1-\eta \lambda) D_{t-1} \le e^{-\eta \lambda}
D_{t-1}$, which concludes the proof of the first part of the
theorem. The second part follows by
observing that $P$ is $(L+\lambda)$ smooth, which gives $P(w)-P(w^*)
\le \frac{L+\lambda}{2} \|w-w^*\|^2$.

\bibliographystyle{plainnat}
\bibliography{curRefs}

\begin{thebibliography}{8}
\providecommand{\natexlab}[1]{#1}
\providecommand{\url}[1]{\texttt{#1}}
\expandafter\ifx\csname urlstyle\endcsname\relax
  \providecommand{\doi}[1]{doi: #1}\else
  \providecommand{\doi}{doi: \begingroup \urlstyle{rm}\Url}\fi

\bibitem[Defazio et~al.(2014{\natexlab{a}})Defazio, Bach, and
  Lacoste-Julien]{defazio2014saga}
Aaron Defazio, Francis Bach, and Simon Lacoste-Julien.
\newblock Saga: A fast incremental gradient method with support for
  non-strongly convex composite objectives.
\newblock In \emph{Advances in Neural Information Processing Systems}, pages
  1646--1654, 2014{\natexlab{a}}.

\bibitem[Defazio et~al.(2014{\natexlab{b}})Defazio, Caetano, and
  Domke]{defazio2014finito}
Aaron~J Defazio, Tib{\'e}rio~S Caetano, and Justin Domke.
\newblock Finito: A faster, permutable incremental gradient method for big data
  problems.
\newblock \emph{arXiv preprint arXiv:1407.2710}, 2014{\natexlab{b}}.

\bibitem[Johnson and Zhang(2013)]{johnson2013accelerating}
Rie Johnson and Tong Zhang.
\newblock Accelerating stochastic gradient descent using predictive variance
  reduction.
\newblock In \emph{Advances in Neural Information Processing Systems}, pages
  315--323, 2013.

\bibitem[Kone{\v{c}}n{\`y} and Richt{\'a}rik(2013)]{konevcny2013semi}
Jakub Kone{\v{c}}n{\`y} and Peter Richt{\'a}rik.
\newblock Semi-stochastic gradient descent methods.
\newblock \emph{arXiv preprint arXiv:1312.1666}, 2013.

\bibitem[{Le Roux} et~al.(2012){Le Roux}, {Schmidt}, and {Bach}]{LSB12-sgdexp}
Nicolas {Le Roux}, Mark {Schmidt}, and Francis {Bach}.
\newblock A stochastic gradient method with an exponential convergence rate for
  finite training sets.
\newblock In \emph{Advances in Neural Information Processing Systems}, pages
  2663--2671, 2012.

\bibitem[Shalev-Shwartz and Zhang(2015)]{ShalevZhangAcc2015}
S.~Shalev-Shwartz and T.~Zhang.
\newblock Accelerated proximal stochastic dual coordinate ascent for
  regularized loss minimization.
\newblock \emph{Mathematical Programming SERIES A and B (to appear)}, 2015.

\bibitem[Shalev-Shwartz and Ben-David(2014)]{MLbook}
Shai Shalev-Shwartz and Shai Ben-David.
\newblock \emph{Understanding Machine Learning: From Theory to Algorithms}.
\newblock Cambridge university press, 2014.

\bibitem[Shalev-Shwartz and Zhang(2013)]{ShalevZh2013}
Shai Shalev-Shwartz and Tong Zhang.
\newblock Stochastic dual coordinate ascent methods for regularized loss
  minimization.
\newblock \emph{Journal of Machine Learning Research}, 14:\penalty0 567--599,
  Feb 2013.

\end{thebibliography}

\end{document}